\documentclass[11pt]{elsarticle}
\usepackage[T1]{fontenc}
\usepackage[utf8]{inputenc}
\usepackage{lmodern}
\usepackage{microtype}
\usepackage[a4paper,margin=1in]{geometry}
\usepackage{amsmath,amssymb,amsthm}
\usepackage{booktabs,tabularx,array}
\newcolumntype{L}{>{\raggedright\arraybackslash}X}
\newcolumntype{P}[1]{>{\raggedright\arraybackslash}p{#1}}
\usepackage{graphicx}
\usepackage{xcolor}
\usepackage{hyperref}
\usepackage{textgreek}
\usepackage{xspace}
\usepackage{enumitem}
\usepackage{siunitx}
\usepackage{caption}
\usepackage{subcaption}
\usepackage{float}
\usepackage{placeins}
\usepackage{multirow}
\usepackage{longtable}
\hypersetup{
    colorlinks=true,
    linkcolor=blue!50!black,
    citecolor=blue!50!black,
    urlcolor=blue!50!black,
    pdftitle={Auxiliary Finite-Difference Residual-Gradient Regularization for PINNs},
    pdfauthor={Stavros Kassinos},
}
\newtheorem{proposition}{Proposition}
\newtheorem{remark}{Remark}
\newcommand{\R}{\mathbb{R}}

\newcommand{\grad}{\nabla}

\newcommand{\Lpde}{\mathcal{L}_{\mathrm{PDE}}}
\newcommand{\Lbc}{\mathcal{L}_{\mathrm{BC}}}

\newcommand{\Ltot}{\mathcal{L}}
\newcommand{\Lfd}{\mathcal{L}_{\mathrm{FD\text{-}RG}}}
\newcommand{\Lad}{\mathcal{L}_{\mathrm{AD\text{-}RG}}}

\newcommand{\kourbeta}{Kourkoutas-\textbeta\xspace}

\begin{document}
\begin{frontmatter}

\title{\textbf{Auxiliary Finite-Difference Residual-Gradient Regularization for PINNs}\\
\large Controlled Poisson evidence and a body-fitted 3D shell application}
\author{Stavros Kassinos\corref{cor1}}
\cortext[cor1]{Corresponding author, kassinos.stavros@ucy.ac.cy}

\affiliation{organization={Computational Sciences Laboratory, Department of Mechanical Engineering, University of Cyprus}, 
            addressline={\\ 1 University Avenue}, 
            city={Aglantzia},
            postcode={2109}, 
            state={Nicosia},
            country={Cyprus}
            }

\date{Manuscript draft, April 2026}

\begin{abstract}
Physics-informed neural networks (PINNs) are often selected by a single scalar loss even when the quantity of interest is more specific. This paper studies a hybrid design in which the governing PDE residual remains automatic-differentiation (AD) based, while finite differences (FD) appear only in an auxiliary term that penalizes gradients of the sampled residual field. Unlike AD residual-gradient penalties, FD/ND substitutions inside the main physics loss, or fully discrete residual objectives, the FD term regularizes the residual field without replacing the PDE residual itself.

We examine the idea in two stages. Stage~1 is a controlled study on a manufactured two-dimensional Poisson problem, comparing a baseline PINN, the FD residual-gradient regularizer, and a matched AD residual-gradient baseline. The FD term reproduces the main effect of residual-gradient control while exposing a clear trade-off between field accuracy and residual cleanliness. Stage~2 transfers the same residual-field logic to a three-dimensional annular heat-conduction benchmark (PINN3D), where baseline errors concentrate near a wavy outer wall; there the auxiliary grid is implemented as a body-fitted shell adjacent to the wall.

In Stage~2, the shell regularizer improves the application-facing quantities, namely outer-wall flux and boundary-condition behavior. Across seeds~0--5 and 100k epochs, the most reliable tested configuration is a fixed shell weight of $5\times 10^{-4}$ under the \kourbeta{} optimizer regime: relative to a matched run without the shell term, it reduces the mean outer-wall BC RMSE from $1.22\times 10^{-2}$ to $9.29\times 10^{-4}$ and the mean wall-flux RMSE from $9.21\times 10^{-3}$ to $9.63\times 10^{-4}$. A linearly scheduled shell weight helps on some seeds but is less consistent. Optimizer and learning-rate choice also matter: under the baseline cosine schedule with initial learning rate $7.5\times 10^{-3}$, Adam with $\beta_2=0.95$ or $\beta_2=0.999$ is unreliable. Adam with $\beta_2=0.999$ becomes usable when the initial learning rate is reduced to $10^{-3}$, but its shell benefit is less robust than under \kourbeta{}. Overall, the results support a targeted view of hybrid PINNs: an auxiliary-only FD regularizer is most valuable when it is aligned with the physical quantity of interest, here the outer-wall flux.
\end{abstract}

\end{frontmatter}

\section{Introduction}

A physics-informed neural network is asked to satisfy several scientific requests at once. It should reduce a PDE residual, honor multiple boundary conditions, remain periodic or symmetric when required, and sometimes fit data or auxiliary constraints as well. The resulting scalar loss is useful, but it is only a compressed summary of a negotiation among these terms. A model can look satisfactory under one scalar objective while still being poor for the quantity that ultimately matters in the application, for example a wall flux or a boundary-condition residual.

That practical mismatch is the starting point of this paper. The motivating PINN3D annular benchmark \cite{Kassinos2025KBeta, Kassinos2025PINN3D} already had a clear pain point before the present work began: the baseline PINN was most fragile near the undulating outer wall, and the quantity of real interest there was not a generic loss value but the wall-normal heat flux and the associated boundary residual. This raised a natural question. If the governing PDE itself only needs low-order derivatives, can we add a weak structured prior that improves the \emph{residual field} in the region where the baseline model is weakest, without replacing the underlying continuous PINN formulation?

Our answer is deliberately focused. We keep the main PDE residual in automatic-differentiation (AD) form, sample that residual on a structured auxiliary grid, and apply finite differences (FD) only to the sampled residual field. The auxiliary term is therefore not a discrete replacement for the PDE residual. It is a structured regularizer on how the already-defined residual field varies in space. In Stage~1 this idea is tested in a clean manufactured Poisson problem. In Stage~2 the same logic is transferred to a body-fitted outer shell wrapped around the PINN3D annulus.

A reader coming from outside the PINN literature may want one sentence of terminology up front. We use \emph{automatic differentiation} (AD) for the exact derivative operators provided by the computational graph of the neural network, and \emph{finite differences} (FD) for the discrete derivative operators used on a structured auxiliary grid. The method in this paper is hybrid in a very specific sense: the governing residual remains continuous and AD-based, while the auxiliary regularizer is FD-based.

The prior literature already constrains how broad the novelty claim can be. Standard PINNs are well established \cite{raissi2019pinn}; gradient-enhanced variants already use derivatives of the residual \cite{yu2022gpinn}; and hybrid AD/FD or AD/ND ideas are also present in the literature \cite{xiang2022hfdpinn,chiu2022canpinn}. The present paper therefore makes a narrower claim. The governing residual remains continuous and AD-based, while finite differences are introduced only through an auxiliary regularizer acting on the \emph{sampled} residual field. In Stage~2 that auxiliary term is further localized to a body-fitted shell near the boundary quantity of interest. Section~\ref{sec:related-work} positions this design more carefully relative to the nearest prior work.

What is new here is the combination of that specific hybrid formulation with a controlled two-stage study. Stage~1 asks a mechanism question: does a structured FD residual-gradient term change training in a meaningful and interpretable way, how does it compare with a matched AD residual-gradient baseline, and is it merely sensitive to one auxiliary-grid phase? Stage~2 asks an application question: can the same residual-field logic, deployed on a body-fitted outer shell, improve the outer-wall flux behavior of a realistic PINN3D annular benchmark? These two stages play different roles, and the manuscript is organized to keep that distinction explicit.

Because the Stage~2 benchmark and model family are inherited from prior PINN3D/\kourbeta{} work, it is useful to separate what is reused from what is new. Table~\ref{tab:inheritance-new} does that explicitly.

\begin{table}[H]
\centering
\caption{Benchmark inheritance versus new contribution. The present manuscript reuses the PINN3D annular benchmark from earlier work, but adds a new structured residual-gradient regularizer together with a two-stage empirical study designed to separate mechanism from application.}
\label{tab:inheritance-new}
\small
\begin{tabularx}{\linewidth}{P{0.47\linewidth} P{0.47\linewidth}}
\toprule
\textbf{Inherited benchmark ingredients} & \textbf{New in this manuscript} \\
\midrule
Steady PINN3D annular heat-conduction workload, wavy outer-wall geometry, six-term weighted objective, reference FD solution, and the base MLP/optimizer context from the prior PINN3D and \kourbeta{} work \cite{Kassinos2025KBeta,Kassinos2025PINN3D}. & An auxiliary-only FD regularizer applied to the \emph{sampled AD residual field}; the controlled Stage-1 Poisson mechanism study; the body-fitted outer-shell deployment in Stage~2; seedwise wall-reference audits; and the optimizer and learning-rate (LR) sensitivity checks used to interpret the Stage-2 result carefully. \\
\bottomrule
\end{tabularx}
\end{table}

The paper is organized to separate controlled mechanism evidence from application evidence. Stage~1 serves as the controlled mechanism study, whereas Stage~2 provides the body-fitted application case. Throughout, the notation is kept as light as possible, and the metric hierarchy is made explicit whenever the scalar objective and the physically meaningful wall quantities disagree.

\subsection*{Contributions}
The paper makes four concrete contributions.
\begin{enumerate}[leftmargin=1.5em]
    \item It formulates an auxiliary-only finite-difference regularizer on the sampled residual field, while keeping the governing PDE residual continuous and AD-based.
    \item It evaluates that regularizer in a controlled Stage-1 benchmark against matched AD residual-gradient comparators, anti-grid-lock audits, and schedule ablations.
    \item It lifts the same residual-field logic to a body-fitted outer shell on a nontrivial 3D annular geometry without extending the auxiliary grid outside the physical domain.
    \item It shows that in Stage~2 the fixed shell regularizer robustly improves outer-wall flux and boundary-condition behavior across six seeds in the main \kourbeta{} regime, while a broader optimizer and learning-rate study clarifies that the effect is portable but less robust under Adam999.
\end{enumerate}

\section{Benchmark problems and evaluation targets}\label{sec:benchmarks}

\subsection{Stage 1: controlled manufactured Poisson benchmark}

Stage~1 uses a manufactured Poisson problem on the unit square. Its role is not to mimic a difficult engineering workload, but to give a clean mechanism study in which exact field values and exact derivatives are available everywhere. This allows us to compare plain PINNs, FD residual-gradient regularizers, and matched AD residual-gradient comparators under dense off-grid audits rather than only at the training cloud.

\subsection{Stage 2: PINN3D annulus and outer-shell target}

Stage~2 reuses the steady PINN3D annular heat-conduction benchmark already developed in the broader \kourbeta{} / PINN3D line of work \cite{Kassinos2025KBeta,Kassinos2025PINN3D,HybridRepo2026}. The physical domain is a finite cylindrical annulus whose outer wall undulates azimuthally,
\[
\Omega = \{(r,\theta,z): r_{\min}\le r\le r_o(\theta),\ 0\le \theta\le 2\pi,\ 0\le z\le L\},
\qquad
r_o(\theta)=r_{\max}+0.25\,r_{\max}\sin(3\theta),
\]
with $r_{\min}=0.2$, $r_{\max}=1.0$, and $L=10r_{\max}$. The inner wall and inlet are held at unit temperature, the outlet is axially insulated, the outer wall carries a prescribed heat flux, and periodicity is enforced in $\theta$.
The prescribed outer-wall Neumann data are not spatially uniform: the imposed heat flux is a piecewise ramped function of the axial coordinate $z$. As a result, the wall quantity of interest carries genuine axial structure, and the wall diagnostics reported later are not simply measuring agreement with a constant-flux boundary.

The inherited PINN3D objective is a weighted six-term objective built from the PDE residual, inner-wall, inlet, outlet, outer-wall, and periodicity terms. The network itself is the same fully-connected model used in the PINN3D benchmark line: a 16-layer multilayer perceptron (MLP) of width 128 with SiLU activations and scalar temperature output. In the steady case, the input is $(r,\theta,z)$ and the output is $T_\theta(r,\theta,z)$. We keep that inherited benchmark structure because the purpose of the present paper is not to redesign PINN3D from scratch, but to test whether a structured residual-field regularizer helps where the baseline model is weakest.

\begin{figure}[h]
    \centering
    \includegraphics[width=0.94\textwidth]{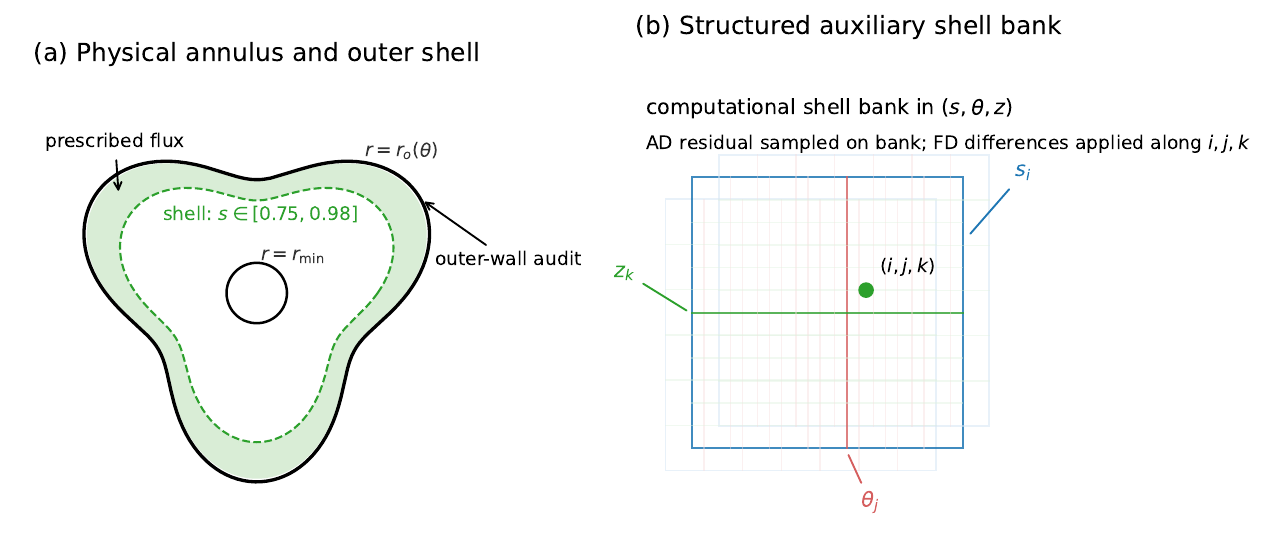}
    \caption{Stage-2 problem geometry and auxiliary shell. Left: cross-section of the annular domain with wavy outer wall $r_o(\theta)$ and the shaded body-fitted shell used by the FD regularizer. Right: structured shell bank in computational coordinates $(s,\theta,z)$. The indices $(i,j,k)$ refer respectively to radial shell line, azimuthal node, and axial node.}
    \label{fig:stage2-geometry}
\end{figure}

For the main Stage-2 experiments, the training cloud contains 4000 interior points, 2000 points on each of the inner/inlet/outlet/outer boundaries, and 400 periodic point pairs. The validation cloud uses the lighter but analogous split already built into the solver: 1024 interior points, 512 points per boundary, and 200 periodic pairs. The shell regularizer is evaluated on a separate body-fitted shell bank, not on the collocation cloud itself. Figure~\ref{fig:stage2-geometry} shows both the physical annulus and the shell-grid idea. The shell is defined as a constant band in the normalized radial coordinate $s$, not as a constant Euclidean-thickness offset from the wall. Its physical thickness therefore scales with the local annular gap $r_o(\theta)-r_{\min}$, where $r_o(\theta)=r_{\max}+0.25\,r_{\max}\sin(3\theta)$. Table~\ref{tab:stage2-protocol} summarizes the inherited PINN3D protocol in the spirit of the \kourbeta{} paper \cite{Kassinos2025KBeta}.

Stage~2 uses three complementary wall-facing diagnostics, and unlike Stage~1 these should not be read as validation-cloud RMSE values. The scalar training and validation losses are still computed on collocation-style clouds, but the RMSE quantities reported later in Table~\ref{tab:stage2-main}, Figures~\ref{fig:stage2-delta} and~\ref{fig:stage2-seedwise}, and the appendix sweep tables are dense wall diagnostics. The first is the \emph{outer-wall BC audit}, an internal consistency check evaluated on a dense wall grid generated directly from the PINN prediction: it compares the predicted wall-normal derivative $\partial_n T$ with the prescribed outer-wall flux, and its root-mean-square value is reported as the wall BC audit RMSE. The second is the \emph{FD wall reference}, a dense outer-wall slice extracted from an \emph{independent} finite-difference (FD) solution; against that reference we report RMSE values for the wall temperature $T_{\mathrm{wall}}$, the wall-normal derivative $\partial_n T_{\mathrm{wall}}$, and the induced wall boundary-condition residual. The construction of this FD reference solution, together with the grid-resolution checks used to justify it as a numerical anchor, is summarized in ~\ref{app:fd-reference}. The third is the \emph{shell probe}, namely the unweighted value of the shell regularizer before multiplication by its coefficient. The shell probe helps interpret how much residual variation remains inside the shell, but it is not itself a primary model-selection metric.

\begin{table}[t]
\centering
\caption{Stage-2 protocol. The benchmark, architecture, and base objective are inherited from the PINN3D/\kourbeta{} line, while the shell regularizer and the wall-audit logic are the new ingredients introduced here.}
\label{tab:stage2-protocol}
\small
\begin{tabularx}{\linewidth}{P{0.28\linewidth} L}
\toprule
\textbf{Item} & \textbf{Stage-2 protocol used in this paper} \\
\midrule
Physical domain & Steady cylindrical annulus with wavy outer wall $r_o(\theta)=r_{\max}+0.25\,r_{\max}\sin(3\theta)$, with $r_{\min}=0.2$, $r_{\max}=1.0$, and $L=10r_{\max}$ \\
Boundary conditions & Inner wall and inlet at unit temperature; outlet Neumann; prescribed flux on the outer wall; periodicity in $\theta$ \\
Base objective & Six weighted terms: PDE, inner, inlet, outlet, outer, and periodic/theta contributions inherited from the PINN3D benchmark \cite{Kassinos2025KBeta} \\
Model & Fully connected 16-layer MLP of width 128 with SiLU activations and scalar temperature output \\
Training cloud & 4000 interior points; 2000 points on each of inner, inlet, outlet, and outer boundaries; 400 periodic point pairs \\
Validation cloud & 1024 interior points; 512 points on each boundary; 200 periodic point pairs \\
Main shell default & Body-fitted outer shell with $s\in[0.75,0.98]$, $(n_s,n_\theta,n_z)=(8,32,32)$, uniform shell distribution, quadrature weighting, fixed bank \\
Main optimizer regime & \kourbeta{} with \texttt{kbeta\_decay=0.98}; default initial LR $7.5\times 10^{-3}$ with cosine decay to $10^{-5}$ \\
Primary Stage-2 audits & Dense outer-wall BC audit and dense comparison against the FD reference for $T_{\mathrm{wall}}$, $\partial_n T_{\mathrm{wall}}$, and BC residual \\
\bottomrule
\end{tabularx}
\end{table}

\section{Related work and positioning}
\label{sec:related-work}

The literature most relevant to the present paper can be grouped into three layers. First, standard PINNs combine a strong-form PDE residual with boundary and data terms \cite{raissi2019pinn}, and their optimization pathologies are now well documented \cite{wang2021gradpath,wang2022ntk}. Second, gradient-enhanced PINNs augment the loss with derivatives of the PDE residual \cite{yu2022gpinn}. Third, several works have already mixed automatic differentiation with numerical differentiation or finite differences, including HFD-PINN \cite{xiang2022hfdpinn} and CAN-PINN \cite{chiu2022canpinn}. A further theoretical caution is that when the training objective is built directly from a finite-difference residual, the learned solution becomes tied to that discrete scheme \cite{langer2026fdpinn}.

Relative to those priors, the present paper should be read as a focused specialization rather than as a new universal PINN principle. The governing residual remains continuous and AD-based. Finite differences appear only in an auxiliary term, where they act on the sampled residual field rather than replacing the main PDE residual. In Stage~1 this allows a controlled comparison between an FD residual-gradient regularizer and a matched AD residual-gradient baseline. In Stage~2 it allows the same residual-field logic to be localized to a body-fitted shell near the outer-wall quantity of interest.

Among the prior hybrids, HFD-PINN is probably the closest in spirit to the present work. The key distinction is that HFD-PINN uses finite differences locally \emph{instead of} automatic differentiation in parts of the physics loss, whereas the present method keeps the governing residual AD-based and introduces finite differences only through an auxiliary regularizer acting on the sampled residual field. In embedded-surrogate settings, this distinction is practically useful because the auxiliary FD term can be chosen to reflect the stencil family, formal order, and boundary closure of the surrounding discretization without replacing the continuous PINN residual itself.

\section{Method}

\subsection{Base PINN formulation}

Let $u_\theta : \Omega \to \R$ denote the network output and let
\begin{equation}
R_\theta(x)=\mathcal{N}[u_\theta](x)-f(x)
\end{equation}
be the continuous PDE residual evaluated by automatic differentiation. With interior cloud $X_r$ and boundary cloud $X_b$, the baseline PINN objective is
\begin{equation}
\Ltot_{\mathrm{PINN}}(\theta)=\Lpde(\theta)+\lambda_{\mathrm{BC}}\Lbc(\theta),
\end{equation}
with
\begin{equation}
\Lpde(\theta)=\frac{1}{|X_r|}\sum_{x\in X_r}|R_\theta(x)|^2,
\qquad
\Lbc(\theta)=\frac{1}{|X_b|}\sum_{x\in X_b}|B_\theta(x)|^2,
\end{equation}
where $B_\theta$ denotes the appropriate boundary residual.

\subsection{Auxiliary FD residual-gradient regularizer}

Let $X_h=\{x_i\}_{i=1}^{N_h}$ be a structured auxiliary grid. We evaluate the continuous residual on that grid,
\begin{equation}
\mathbf{r}_\theta = \bigl(R_\theta(x_i)\bigr)_{i=1}^{N_h},
\end{equation}
and then apply finite-difference derivative operators to the sampled residual field. In Stage 1 the auxiliary term takes the form
\begin{equation}
\Lfd(\theta)=\frac{1}{|X_h|}\sum_{x_i\in X_h}\left[\bigl(D_x^h\mathbf{r}_\theta\bigr)_i^2+\bigl(D_y^h\mathbf{r}_\theta\bigr)_i^2\right].
\end{equation}
The total loss becomes
\begin{equation}
\Ltot(\theta)=\Lpde(\theta)+\lambda_{\mathrm{BC}}\Lbc(\theta)+\lambda_{\mathrm{aux}}\Lfd(\theta).
\end{equation}

This is not the same as replacing the governing residual by a discrete PDE operator on $u_\theta$. The main residual is still continuous. The discrete operator acts only on the \emph{sampled} residual field. The next observation is intentionally modest: it is a compatibility check, not a convergence theorem. Its role is simply to show that the auxiliary term does not penalize an exact pointwise solution merely because that term is assembled from finite differences on an auxiliary grid.

\begin{proposition}[Exact-solution compatibility of the auxiliary term]
Suppose $u^\star$ solves the continuous PDE pointwise so that $R[u^\star](x)=0$ for all $x\in\Omega$, and suppose the finite-difference operators $D_j^h$ are linear and annihilate constants. Then for any auxiliary grid $X_h$,
\begin{equation}
\Lfd[u^\star]=0.
\end{equation}
\end{proposition}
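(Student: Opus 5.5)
The argument is a direct substitution, and I will carry it out in three steps. I note at the outset that the hypothesis ``annihilate constants'' is stronger than needed: the zero vector is killed by any linear map. The real content is checking that every grid point where the residual is sampled lies where the residual is known to vanish.

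\emph{Step one: the sampled residual is zero.} Since $R[u^\star](x)=0$ for every $x\in\Omega$, and the auxiliary grid satisfies $X_h\subset\Omega$, the sampled residual vector is $\mathbf{r}[u^\star]=\bigl(R[u^\star](x_i)\bigr)_{i=1}^{N_h}=\mathbf{0}\in\R^{N_h}$. The inclusion $X_h\subset\Omega$ is the modelling assumption implicit in the construction: in Stage~1 the grid lives in the unit square, and in Stage~2 the shell is built inside the physical domain, with $s\in[0.75,0.98]$.

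\emph{Step two: the difference operators vanish on it.} Each $D_j^h$ is linear, so $D_j^h\mathbf{0}=\mathbf{0}$. Alternatively, $\mathbf{0}$ is the constant field with value $0$, and this is precisely where the annihilation-of-constants hypothesis enters. Either way, $\bigl(D_x^h\mathbf{r}[u^\star]\bigr)_i=\bigl(D_y^h\mathbf{r}[u^\star]\bigr)_i=0$ for every index $i$, including nodes handled by one-sided boundary closures. Those closures are also linear stencils acting on sampled values, so the same conclusion holds for them.

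\emph{Step three: the loss vanishes.} Substituting into the definition of $\Lfd$ gives a normalized sum of squares of zeros, hence $\Lfd[u^\star]=0$. Nothing in this argument uses the specific geometry of $X_h$, so it holds for any auxiliary grid contained in $\Omega$. For the same reason it extends verbatim to the Stage~2 shell variant with quadrature weights, since a weighted sum of zeros is still zero.

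The only step needing care is the first one: making explicit that every stencil node used by $D_j^h$ is a point of $\Omega$ where the residual is evaluated. If a stencil reached outside the domain, the continuous residual there would be undefined. I will therefore state the standing assumption $X_h\subset\Omega$ and note that the constant-annihilation property also covers the slightly more general case where $R[u^\star]$ equals a nonzero constant on $X_h$.
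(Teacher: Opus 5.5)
Your proposal is correct and follows the paper's proof: the sampled residual is the zero vector, every linear difference operator maps it to zero, and so every summand of $\Lfd$ vanishes. Your added remarks are accurate refinements that the paper leaves unstated. First, the argument assumes $X_h\subset\Omega$. Second, linearity alone suffices, and the constant-annihilation hypothesis only buys the extension to residuals that are constant on the grid.
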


\begin{proof}
Sampling the exact residual on the auxiliary grid yields the zero vector. Applying any linear difference operator that annihilates constants to the zero vector gives zero. Hence every summand in $\Lfd$ vanishes.
\end{proof}

\begin{remark}
Proposition~1 establishes only an exact-solution compatibility property. It does not claim that minimizing $\Lfd$ yields the exact continuum solution, nor does it provide a convergence rate with respect to the auxiliary spacing $h$. Its practical value is narrower but still useful: unlike a direct high-order smoothness penalty on $u_\theta$, the residual-field term vanishes on any exact pointwise solution of the governing equation. A raw biharmonic penalty on $u_\theta$, by contrast, is not generally minimized by the exact solution of a generic Poisson problem.
\end{remark}

\subsection{AD residual-gradient comparator and weight matching}

The AD comparator keeps the same residual field but differentiates it by AD rather than FD:
\begin{equation}
\Lad(\theta)=\frac{1}{|X_h|}\sum_{x_i\in X_h}\|\grad R_\theta(x_i)\|^2.
\end{equation}
Because $\Lfd$ and $\Lad$ are on different raw scales, the Stage-1 experiments match the AD coefficient to a chosen FD anchor weight. If
\begin{equation}
S_{\mathrm{FD}}=\mathbb{E}[\|\grad_h R\|^2],
\qquad
S_{\mathrm{AD}}=\mathbb{E}[\|\grad R\|^2],
\end{equation}
on the same checkpoint and auxiliary grid, then the matched AD coefficient is chosen as
\begin{equation}
\lambda_{\mathrm{AD,match}}
=\lambda_{\mathrm{FD,anchor}}\,\frac{S_{\mathrm{FD}}}{S_{\mathrm{AD}}}.
\end{equation}
This keeps the AD and FD regularizers comparable at switch-on without pretending that their raw values live on the same scale.

\subsection{Scheduling}

Both Stage 1 and Stage 2 support a ramp-hold-decay schedule for the auxiliary weight $\lambda_{\mathrm{aux}}$. Let $\lambda_0$ denote the target weight. The live auxiliary weight is zero before a start epoch, ramps linearly to $\lambda_0$, optionally holds at $\lambda_0$, then decays either linearly or by cosine to a final fraction $\rho\lambda_0$. In Stage~1, such schedules remain scientifically useful because they help map the field-versus-residual trade-off. In Stage~2, an earlier three-seed schedule ablation suggested that a stronger-early/weaker-late shell can help on some seeds, but the expanded six-seed evidence supports the fixed shell as the stable default configuration.

\subsection{Stage 2: body-fitted shell regularization}

The Stage-2 domain is the cylindrical annulus introduced in Section~\ref{sec:benchmarks},
\begin{equation}
\Omega_{\mathrm{ann}}=\{(r,\theta,z): r_{\min}<r<r_o(\theta),\; 0<z<L\},
\qquad
r_o(\theta)=r_{\max}+0.25\,r_{\max}\sin(3\theta).
\end{equation}
To avoid extending the auxiliary grid outside the physical domain, we introduce a computational radial coordinate
\begin{equation}
s=\frac{r-r_{\min}}{r_o(\theta)-r_{\min}}\in[0,1],
\qquad
r(s,\theta)=r_{\min}+s\bigl(r_o(\theta)-r_{\min}\bigr).
\end{equation}
The main Stage-2 shell occupies $s\in[0.75,0.98]$ and is discretized by shell-bank nodes
\begin{equation}
\{s_i\}_{i=1}^{n_s},\qquad \{\theta_j\}_{j=1}^{n_\theta},\qquad \{z_k\}_{k=1}^{n_z},
\end{equation}
with $(n_s,n_\theta,n_z)=(8,32,32)$ in the default experiments. The indices are intentionally concrete: $i$ counts radial shell lines, $j$ counts azimuthal nodes, and $k$ counts axial nodes.

At those shell nodes we still compute the continuous PDE residual by AD. Writing
\begin{equation}
R_{ijk}=R_\theta\bigl(r(s_i,\theta_j),\theta_j,z_k\bigr)=\Delta_c T_\theta\bigl(r(s_i,\theta_j),\theta_j,z_k\bigr),
\end{equation}
where $\Delta_c$ is the cylindrical Laplacian, the shell regularizer is
\begin{equation}
\mathcal{L}_{\mathrm{shellRG}}
=
\frac{\sum_{i,j,k} w_{ijk}\left[
\bigl(D_r^h R\bigr)_{ijk}^2+
\left(\frac{D_\theta^h R}{h_\theta}\right)_{ijk}^2+
\bigl(D_z^h R\bigr)_{ijk}^2
\right]}
{\sum_{i,j,k} w_{ijk}}.
\end{equation}
Here $D_r^h$, $D_\theta^h$, and $D_z^h$ are finite-difference operators along the three shell directions; $w_{ijk}$ are the shell weights (quadrature weights in the main experiments); and
\begin{equation}
h_\theta(s,\theta)=\sqrt{r(s,\theta)^2+\left(s\,\frac{dr_o}{d\theta}\right)^2}
\end{equation}
converts angular differences into physical tangential distance. In the implementation, the radial operator uses the local physical shell spacing, and the main shell average excludes the shell boundary lines rather than inventing off-domain ghost values.

This is the point where the method differs most clearly from a direct discrete-PDE replacement. The residual field itself remains continuous and AD-based. The structured shell contributes only a localized FD probe of how that residual field varies near the outer wall.

For readability, Tables~\ref{tab:stage1-main} and~\ref{tab:stage2-main} use short model labels.
\emph{OFF} denotes the baseline model without the auxiliary regularizer.
In Stage~1, \emph{FD} denotes the finite-difference residual-gradient regularizer, whereas
\emph{AD} denotes the automatic-differentiation residual-gradient baseline.
The qualifier \emph{fixed} indicates that the auxiliary weight $\lambda_{\mathrm{aux}}$ is held constant after activation,
whereas \emph{linear} indicates a linearly scheduled weight that is ramped up after activation,
held for a prescribed interval, and then decayed to a smaller late-training value.
In Stage~2, \emph{shell fixed} and \emph{shell linear} denote the corresponding body-fitted
outer-shell variants.
To keep the optimizer discussion compact, we write \emph{Adam95} and \emph{Adam999}
for Adam with $\beta_2=0.95$ and $\beta_2=0.999$, respectively.
We write \kourbeta{} for the Kourkoutas-$\beta$ optimizer family used in the inherited PINN3D benchmark; when we refer to the \kourbeta{} regime, we mean this optimizer family together with the associated \texttt{kbeta\_decay} setting and the learning-rate schedule specified for the corresponding experiment.

\section{Stage 1: controlled Poisson benchmark}

\subsection{Setup}

The Stage-1 PDE is the manufactured Poisson problem on $[0,1]^2$ with exact solution
\begin{equation}
u^\star(x,y)=\sin(\pi x)\sin(\pi y)+0.2\sin(3\pi x)\sin(2\pi y).
\end{equation}
The main experiment uses a 6-layer width-96 \texttt{tanh} MLP. Model initialization, training cloud, and validation cloud are seeded separately, and all reported 30k results evaluate the best-validation checkpoint on a fresh Sobol audit together with shifted-grid checks. The core replicated comparison keeps five arms: plain PINN, fixed FD $10^{-3}$, scheduled FD $10^{-3}\to10^{-4}$, fixed AD matched from $10^{-3}$, and scheduled AD matched from the same anchor. The role of Stage~1 is therefore not to crown a universal winner, but to map the field-versus-residual trade-off under controlled conditions.

\subsection{Main three-seed result}

Stage~1 is best read as a controlled mechanism study. The comparison includes the baseline PINN, fixed and scheduled FD residual-gradient regularizers, and a matched AD residual-gradient baseline, all evaluated under the same training and audit protocol. To check that the observed gains are not tied to a particular auxiliary-grid phase, Stage~1 also includes anti-grid-lock ablations based on alternative auxiliary-grid constructions together with shifted-grid and fresh-cloud audits; these are summarized briefly in the main text and reported in detail in \ref{sec:App_Stage1_ablations} (see in particular Table~\ref{tab:anti-grid-lock} and the accompanying discussion).

The main 30k Stage-1 comparison is shown in Table~\ref{tab:stage1-main}. To avoid ambiguity, the table mixes two different kinds of quantities. The column \emph{best val} is the held-out validation-cloud objective used for model selection, whereas the relative $L^2$ and RMSE columns are computed on the fresh Sobol audit against the manufactured solution and residual. Stage~1 therefore does not use an external numerical reference solver. The results make two points at once. First, every regularized arm improves mean best validation and mean fresh-cloud residual metrics over the plain baseline. Second, the ranking is genuinely multi-objective rather than one-dimensional. Fixed AD is the strongest mean residual cleaner; scheduled AD is the best mean field approximant; fixed and scheduled FD remain strong simpler alternatives.

\begin{table}[t]
\centering
\caption{Core 30k three-seed Stage-1 comparison on the fresh Sobol audit. Entries are mean $\pm$ sample standard deviation over seeds 0, 1, and 2. In the scaled columns, the numbers shown are mantissas under the power of ten indicated in the header.}
\label{tab:stage1-main}
\small
\setlength{\tabcolsep}{4.5pt}
\renewcommand{\arraystretch}{1.08}
\begin{tabular*}{\linewidth}{@{\extracolsep{\fill}}lccccc@{}}
\toprule
model
& \shortstack[c]{best val\\($\times 10^{-4}$)}
& \shortstack[c]{rel $L^2(u)$\\($\times 10^{-3}$)}
& \shortstack[c]{rel $L^2(\nabla u)$\\($\times 10^{-3}$)}
& \shortstack[c]{residual RMSE\\($\times 10^{-2}$)}
& \shortstack[c]{$\|\nabla R\|$\\RMSE} \\
\midrule
PINN off
& $6.14 \pm 0.30$
& $2.46 \pm 0.50$
& $8.75 \pm 1.79$
& $2.45 \pm 0.10$
& $1.467 \pm 0.175$ \\

FD fixed
& $3.20 \pm 0.68$
& $2.41 \pm 0.35$
& $8.26 \pm 1.12$
& $1.63 \pm 0.21$
& $0.996 \pm 0.052$ \\

FD linear
& $4.20 \pm 0.48$
& $2.29 \pm 0.38$
& $8.08 \pm 1.27$
& $1.98 \pm 0.16$
& $1.162 \pm 0.032$ \\

AD fixed
& $\mathbf{3.01} \pm 0.78$
& $2.49 \pm 0.33$
& $8.24 \pm 1.06$
& $\mathbf{1.49} \pm 0.21$
& $\mathbf{0.821} \pm 0.092$ \\

AD linear
& $3.82 \pm 0.55$
& $\mathbf{2.24} \pm 0.36$
& $\mathbf{7.85} \pm 1.18$
& $1.86 \pm 0.18$
& $1.038 \pm 0.044$ \\
\bottomrule
\end{tabular*}
\normalsize
\end{table}

Figure~\ref{fig:stage1-frontier} shows the resulting mean field-versus-residual frontier. To keep the visual language simple, OFF is shown in blue, FD arms in green, and AD arms in orange; fixed arms use squares and scheduled arms circles. The fixed arms move farthest downward in residual RMSE, whereas the scheduled AD arm moves farthest left in field error. Figure~\ref{fig:stage1-delta} makes the same point relative to the plain baseline: both schedules improve all four fresh-cloud means, while the fixed arms provide the strongest residual reductions. Runtime follows the expected ordering and is reported here in the text rather than in the table: in the three-seed means, OFF is the cheapest arm (about 145~s), FD regularization sits in the middle (about 347~s), and AD regularization is the most expensive (about 464--468~s).

\begin{figure}[t]
    \centering
    \includegraphics[width=0.84\textwidth]{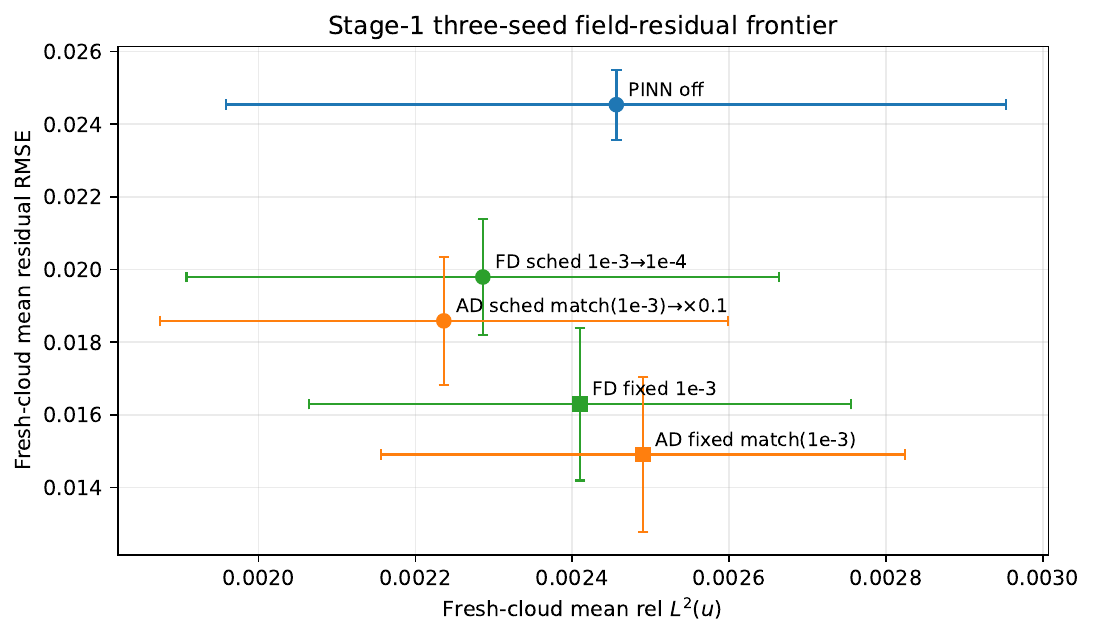}
    \caption{Three-seed mean field-residual frontier for Stage 1. The fixed AD and fixed FD runs (squares) achieve the lowest residual RMSE, while the scheduled runs (circles) achieve the best field accuracy among the core regularized arms.}
    \label{fig:stage1-frontier}
\end{figure}

\begin{figure}[t]
    \centering
    \includegraphics[width=0.84\textwidth]{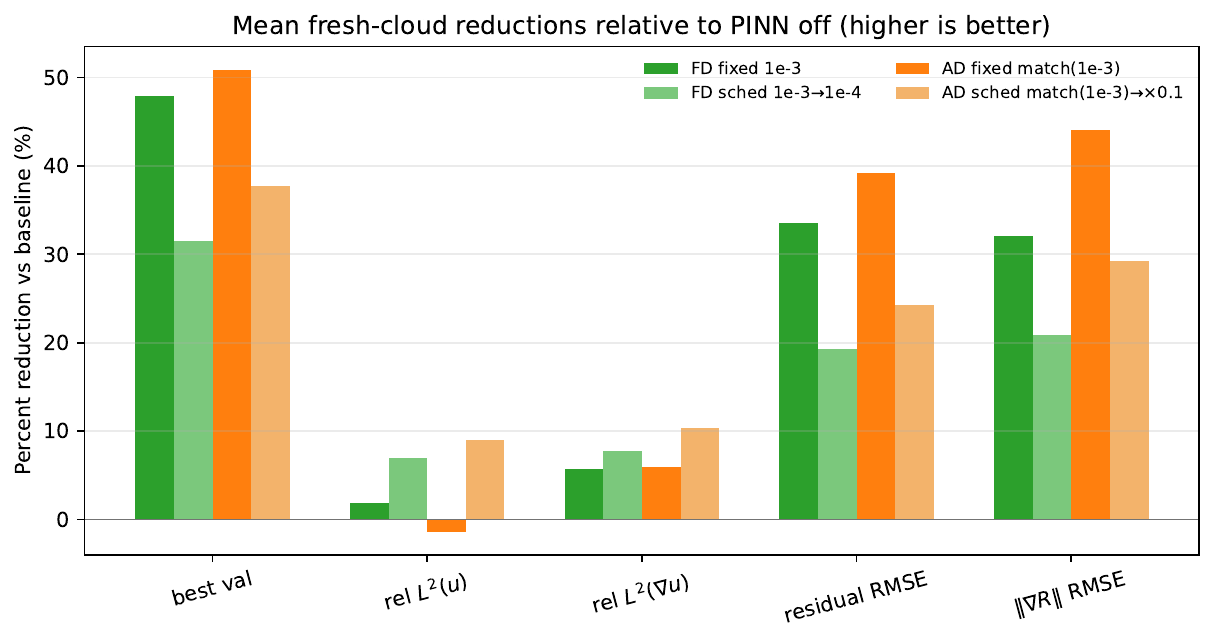}
    \caption{Mean fresh-cloud error reductions relative to the plain PINN baseline. Positive bars mean lower mean error than the baseline. Both schedules improve all four fresh-cloud metrics shown here.}
    \label{fig:stage1-delta}
\end{figure}

\subsection{What Stage 1 does and does not establish}

Stage~1 should be read as a controlled mechanism study rather than as a competition intended to identify a universal winner. Across the three reported seeds, the regularized runs move the solution along a reproducible trade-off between field accuracy and residual cleanliness, and the accompanying anti-grid-lock checks (\ref{sec:App_anti_grid_lock}) do not indicate obvious sensitivity to a single auxiliary-grid phase. What Stage~1 does \emph{not} establish is uniform superiority of the FD residual-gradient regularizer over the matched AD residual-gradient baseline. In the present experiments, the fixed AD run remains the strongest mean residual cleaner and mean validation performer. The more informative conclusion is therefore that Stage~1 provides a controlled map of the mechanism and of the resulting Pareto trade-off. Within that picture, the FD auxiliary regularizer emerges as a competitive alternative to an AD residual-gradient regularizer, with the practical advantages of lower derivative order and a more natural compatibility with the discrete language of a surrounding numerical solver.

\section{Stage 2: body-fitted shell regularization in PINN3D}

\subsection{Problem and evaluation protocol}

We now turn from the benchmark description to the specific Stage-2 comparison. All Stage-2 runs use the same inherited PINN3D workload summarized in Table~\ref{tab:stage2-protocol}. The baseline optimizer regime is \kourbeta{} with \texttt{kbeta\_decay=0.98}. Throughout the Stage~2 optimizer study we use two cosine learning-rate schedules with the same final value of $10^{-5}$. The inherited schedule starts from an initial learning rate of $7.5\times 10^{-3}$; when we need a short label, we refer to it as the \emph{default LR} schedule. The milder schedule starts from $10^{-3}$; when a short label is convenient, we refer to it as the \emph{low-LR} schedule.

The Stage-2 design space considered in this paper includes three arms:
\begin{enumerate}[leftmargin=1.5em]
    \item \textbf{OFF}: the base PINN3D objective with no shell regularizer;
    \item \textbf{fixed shell}: a constant shell residual-gradient weight of $5\times 10^{-4}$;
    \item \textbf{scheduled shell}: a linear shell-weight schedule that starts at $10^{-3}$ and decays to $5\times 10^{-4}$ after the hold interval.
\end{enumerate}

The key modeling choice is that evaluation is not reduced to one scalar loss. In Stage~2 the shell is intended to help exactly where the baseline model struggles most: at the outer wall. We therefore treat the dense wall audits as the primary evidence and interpret the scalar objective as supporting information rather than as the sole model-selection criterion. This choice becomes important later, because some optimizer/LR variants improve a scalar validation total without being the best on the wall-flux quantities that motivated the shell in the first place.

\subsection{Six-seed 100k main result}

Initial three-seed schedule ablations suggested that a linearly decayed shell weight could be competitive on some seeds, but that variant was not consistent enough to motivate an expanded six-seed main sweep. The principal Stage~2 comparison therefore focuses on the two most informative arms in the main \kourbeta{} regime: \emph{OFF} and a \emph{fixed shell} with weight $5\times 10^{-4}$.

\begin{table}[t]
\centering
\caption{Stage-2 six-seed 100k comparison on the wavy-annulus benchmark under the main \kourbeta{} regime. Entries are mean $\pm$ sample standard deviation over seeds 0--5. In the scaled columns, the numbers shown are mantissas under the power of ten indicated in the header.}
\label{tab:stage2-main}
\small
\setlength{\tabcolsep}{4.8pt}
\renewcommand{\arraystretch}{1.08}
\begin{tabular*}{\linewidth}{@{\extracolsep{\fill}}lcccc@{}}
\toprule
model
& \shortstack[c]{final loss\\($\times 10^{-5}$)}
& \shortstack[c]{wall BC RMSE\\($\times 10^{-3}$)}
& \shortstack[c]{$dT/dn$ RMSE\\($\times 10^{-3}$)}
& \shortstack[c]{$T_{\mathrm{wall}}$ RMSE\\($\times 10^{-2}$)} \\
\midrule
OFF
& $2.86 \pm 5.94$
& $12.17 \pm 9.00$
& $9.21 \pm 6.09$
& $5.06 \pm 5.55$ \\

shell fixed
& $\mathbf{0.231} \pm 0.351$
& $\mathbf{0.929} \pm 0.403$
& $\mathbf{0.963} \pm 0.418$
& $\mathbf{1.035} \pm 0.009$ \\
\bottomrule
\end{tabular*}
\normalsize
\end{table}

Relative to OFF, the fixed shell reduces the mean wall BC RMSE by about a factor of 13 and the mean wall-flux RMSE by about a factor of 10. Across the six paired seeds it improves those two primary Stage~2 metrics on every seed, improves wall-temperature RMSE on five of the six seeds, and improves the final scalar objective on four of the six seeds. An exact paired sign test therefore gives a two-sided $p$-value of $0.031$ for the wall-BC and wall-flux comparisons, whereas the corresponding win counts for wall temperature (5/6) and final scalar loss (4/6) are less decisive. Appendix Table~\ref{tab:app-stage2-kbeta-sweep} lists the six paired \kourbeta{} runs explicitly. Runtime is again discussed here rather than in the table: the fixed shell costs about $3.27\times 10^{-3}$ minutes per epoch, versus about $1.55\times 10^{-3}$ for OFF, so the shell roughly doubles the epoch cost in exchange for substantially better wall-facing behavior. Figure~\ref{fig:stage2-delta} summarizes the mean reductions relative to OFF, and Figure~\ref{fig:stage2-seedwise} shows the six-seed paired comparison on the two primary wall-facing metrics.

\subsection{How Stage 2 should be interpreted}
Stage~2 should be interpreted as a targeted improvement result rather than as a claim of uniform improvement in every scalar objective. The shell regularizer was introduced to improve the outer-wall flux behavior, and the Stage~2 evidence should therefore be judged primarily by flux- and boundary-condition-facing metrics rather than by wall temperature alone or by a single scalar validation objective. Seeds~2 and~4 make this point especially clearly: in both cases the OFF run attains the smaller final scalar loss, yet the fixed shell produces substantially better outer-wall flux and boundary-condition behavior. For the present application, the natural model-selection hierarchy is therefore
\begin{enumerate}[leftmargin=1.5em]
    \item wall-flux RMSE, $\mathrm{RMSE}(\partial_n T_{\mathrm{wall}})$, against the FD reference,
    \item outer-wall boundary-condition residual RMSE against the FD reference,
    \item outer-wall BC audit RMSE on the independent wall diagnostic grid,
    \item scalar loss and validation loss,
    \item wall-temperature RMSE as a secondary field metric.
\end{enumerate}
Under that hierarchy, the fixed shell with weight $5\times10^{-4}$ is the most stable Stage~2 default among the tested configurations.

\begin{figure}[h]
    \centering
    \includegraphics[width=0.84\textwidth]{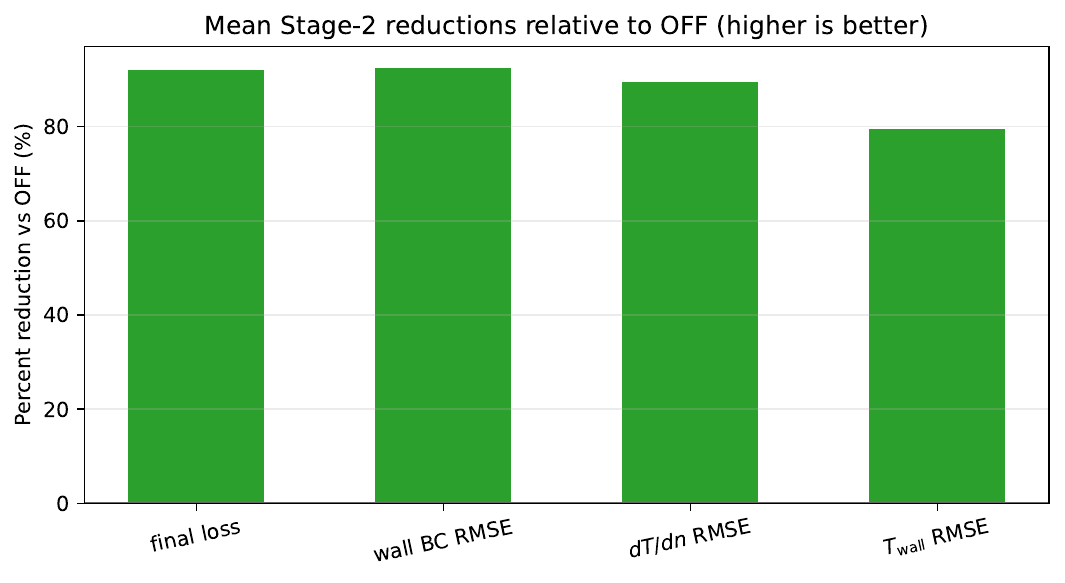}
    \caption{Mean Stage-2 reductions relative to the OFF baseline across seeds 0--5. The fixed shell improves all four metrics shown here, with the largest relative gains on the wall-flux and wall-BC quantities that motivate the shell construction.}
    \label{fig:stage2-delta}
\end{figure}

\begin{figure}[h]
    \centering
    \includegraphics[width=0.84\textwidth]{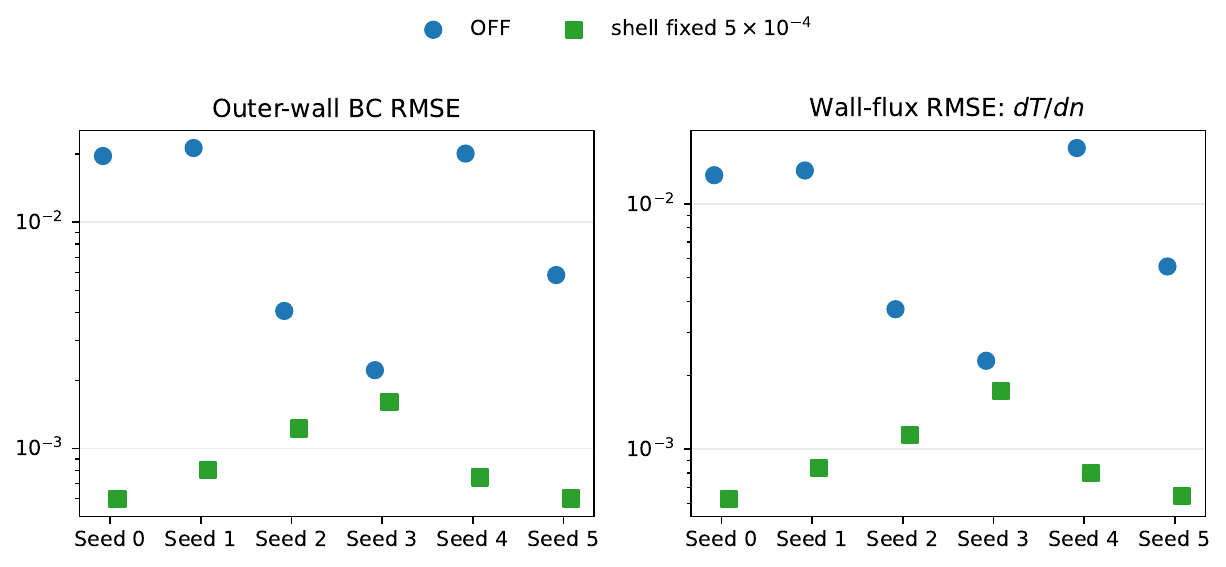}
    \caption{Six-seed Stage-2 comparison on the two primary wall-facing metrics under the main \kourbeta{} regime. Seeds are shown as discrete replicate categories rather than along a continuous axis. The fixed shell (squares) improves both the wall BC RMSE and the wall-flux RMSE on all six seeds.}
    \label{fig:stage2-seedwise}
\end{figure}

\subsection{Optimizer and learning-rate sensitivity}

The primary Stage~2 claim is centered on the \kourbeta{} regime because earlier work identified it as the strongest optimizer family on the PINN3D benchmark family~\cite{Kassinos2025KBeta}. However, we have also performed a  optimizer study for a narrower purpose: to test whether the shell effect survives outside that regime and to distinguish optimizer failures from failures of the regularizer itself.

Under the inherited Stage~2 cosine schedule with initial learning rate $7.5\times10^{-3}$, neither Adam95 nor Adam999 provides a reliable baseline: the OFF runs diverge, and the fixed-shell counterparts are less catastrophic but still unusable. This identifies the inherited schedule as appropriate for \kourbeta{} on this benchmark, but too aggressive for standard Adam. Appendix Table~\ref{tab:app-stage2-optlr-seed2} records the seed-2 optimizer and learning-rate matrix that highlights the failure mode of standard Adam.

When the initial learning rate is lowered to $10^{-3}$ Adam999 becomes usable.
Appendix Table~\ref{tab:app-stage2-adam999-sweep} summarizes the completed Adam999 sweep with the milder initial learning rate $10^{-3}$ across seeds~0--5. Relative to OFF, the fixed shell lowers the median final loss from $1.13\times10^{-4}$ to $2.96\times10^{-5}$, lowers the median wall BC RMSE from $1.10\times10^{-2}$ to $2.98\times10^{-3}$, and lowers the median wall-flux RMSE from $9.70\times10^{-3}$ to $3.05\times10^{-3}$. It also improves wall-temperature RMSE on five of the six paired seeds.

However, the seedwise picture is more mixed than the medians alone suggest. On the two primary wall-facing metrics, i.e. wall-flux RMSE and wall-BC RMSE, the fixed shell is better on three of the six Adam999 pairs and worse on the other three. An exact paired sign test therefore gives no evidence of a consistent per-seed advantage in this regime ($p=1.0$, two-sided for both metrics). 

These results suggest that the shell effect is not exclusive to \kourbeta{}, but nevertheless the performance of the method is sensitive to the choice of optimizer and learning rate schedules. That is why the fixed-shell \kourbeta{} configuration remains the recommended default in the present benchmark family.

\section{Discussion}

The two-stage structure of the paper is intended to separate mechanism from application. Stage~1 asks whether a structured finite-difference probe of the residual field changes optimization in a controlled and interpretable way. Stage~2 then asks whether the same idea remains useful once it is transplanted into a realistic three-dimensional benchmark with a clear physical target. Taken together, the two stages support a practical message: the hybrid regularizer is most compelling when it is aligned with the quantity that the user actually wishes to improve.

The results also clarify the practical role of the proposed hybrid design. The method does not replace the continuous PINN residual by a discrete one. Instead, it preserves the AD-based governing residual and adds a structured finite-difference probe only through an auxiliary term on the sampled residual field. The significance of that design is therefore best understood through the quantities it improves in practice, which is the focus of the discussion below.

\subsection{Choosing the method for the metric that matters}

One lesson that emerges especially clearly from Stage~2 is that model selection depends on the quantity of interest. In the annular benchmark studied here, the shell regularizer is most convincing when the evaluation focuses on outer-wall flux consistency and boundary-condition behavior. Those are the metrics for which the body-fitted shell was designed, and those are the metrics on which it delivers the most robust gains across seeds. By contrast, scalar objective values, scalar validation values, and wall-temperature errors can tell a more mixed story. A run can therefore look attractive under one summary statistic and less attractive under another without any contradiction.

This is not a weakness of the method; it is a reminder that inverse or physics-informed models should be judged by the physical quantity that motivated the regularizer in the first place. In the present problem, that quantity is the outer-wall flux behavior, not the smallest possible value of a single generic scalar objective. For that reason, the paper treats the wall-flux and wall-BC diagnostics as the primary Stage~2 selection criteria, and interprets aggregate loss values as supporting rather than decisive evidence.

The optimizer study adds one more practical qualification. The shell effect is not exclusive to \kourbeta{}, but its stability depends on the optimizer and learning-rate regime. As shown in Section~6 and Appendix Tables~\ref{tab:app-stage2-optlr-seed2} and~\ref{tab:app-stage2-adam999-sweep}, Adam95 and Adam999 are unreliable under the inherited Stage~2 learning-rate schedule, whereas Adam999 becomes usable after the initial learning rate is reduced to $10^{-3}$. Even in that milder regime, however, the shell benefit is less uniform across seeds than in the default \kourbeta{} setting. The optimizer results therefore support the same practical conclusion as the main Stage~2 sweep: the fixed-shell \kourbeta{} configuration remains the preferred default in the present benchmark family.

\subsection{Reproducibility note}

Unless otherwise noted, all reported experiments in the paper were generated on a Mac Studio with M3 Ultra, using macOS 26.4 (25E246), Python 3.13.11, and MLX 0.31.1. The
matched-seed comparisons were kept on a single hardware/software stack.

\section{Conclusion}

This paper studies a hybrid PINN design in which finite differences are introduced only through an auxiliary regularizer on the sampled residual field, while the governing residual itself remains AD-based. Stage~1 shows that this auxiliary residual-gradient idea has a measurable effect on training and provides a controlled map of the trade-off between field accuracy and residual cleanliness. Stage~2 shows that the same idea can be transplanted into a realistic three-dimensional benchmark by localizing the regularizer to a body-fitted outer shell.

For the PINN3D annulus benchmark considered here, the most important quantities of interest are the outer-wall heat-flux and boundary-condition metrics. In that setting, the fixed shell regularizer produces the clearest and most stable gains. Under the \kourbeta{} regime with the default Stage~2 learning-rate schedule, the fixed shell improves the two pre-specified primary Stage~2 metrics on all six reported seeds. Under an exact paired sign test, this corresponds to $p=0.031$ in the two-sided case ($p=0.016$ one-sided), providing statistical support for improvement on the primary wall-facing quantities of interest.

The optimizer study shows that the shell effect is not exclusive to \kourbeta{}, but it also shows that optimizer and learning-rate choices matter. With the inherited, more aggressive Stage~2 learning-rate schedule, Adam95 and Adam999 do not provide reliable baselines on this benchmark. When the initial learning rate is reduced to $10^{-3}$, Adam999 becomes usable and the fixed shell often improves performance, but the gain is not uniformly robust across seeds on the primary wall-flux and wall-BC metrics. The strongest and most stable Stage~2 results therefore remain those obtained with \kourbeta{}.

Taken together, the results support a focused conclusion. An auxiliary FD regularizer applied to the sampled AD residual field can be a useful addition to a PINN when it is aligned with a clearly defined quantity of interest and deployed where that quantity is most sensitive. In the present benchmark family, that role is played by the body-fitted outer shell and the corresponding outer-wall flux behavior.

\appendix

\section{Finite-difference reference solution for Stage~2}\label{app:fd-reference}

The Stage~2 wall-reference metrics are anchored to an independent steady finite-difference (FD) solution of the same annular heat-conduction problem. This reference solve is used as a shared numerical anchor for field quality, not as a claim of exact truth. Its role is comparative: it tells us whether distinct PINN runs produce materially different wall temperature and wall-flux behavior on the same underlying problem. The FD solution is computed on a cylindrical grid using the companion solver to be released with the manuscript artifacts.

The practical wall reference used in the paper is extracted from the $(25,48,193)$ FD solution in $(N_s,N_\theta,N_z)$. This choice intentionally prioritizes axial resolution, because the prescribed outer-wall flux is a nonuniform function of $z$ rather than a spatially uniform boundary load. The wall-reference diagnostics used in Stage~2 are extracted on the outer wall and correspond to the quantities denoted in the main text by $T_{\mathrm{wall}}$, $\partial_n T_{\mathrm{wall}}$, and the induced wall boundary-condition residual. These quantities differ conceptually from the \emph{outer-wall BC audit}: the audit is evaluated directly from the PINN prediction against the prescribed flux, whereas the FD wall-reference metrics compare the PINN prediction against an independent numerical solution.

A limited grid-resolution study was used to assess whether this FD field is sufficiently resolved for the comparative role it plays in the paper. In the axial direction, refinement at fixed $N_s$ and $N_\theta$ changes the solution only weakly. In the radial direction, wall temperature is more sensitive than wall flux, but the wall-flux slice is already effectively converged for practical purposes on the finer grids. In particular, comparing the manuscript anchor $(25,48,193)$ against a radially refined $(193,48,193)$ solution changes the wall-flux slice only negligibly ($dTdn_{\mathrm{wall}}$ RMSE $=5.63\times10^{-6}$, max-abs $=2.15\times10^{-5}$), while the wall-temperature slice is more sensitive ($T_{\mathrm{wall}}$ RMSE $=9.65\times10^{-3}$). Importantly, the main Stage~2 conclusions are not supported by the FD wall-reference metrics alone: the same qualitative shell-versus-OFF advantage is also visible in the independent outer-wall BC audit. Table~\ref{tab:app-fd-grid-study} summarizes the checks used here.

\begin{table}[h]
\centering
\caption{Summary of the FD grid checks used to calibrate the Stage~2 wall-reference solution. Relative $L_2$ changes are computed on common grid points after refinement. For the wall-reference slice, the most important quantity is the wall-normal flux; the corresponding radial-refinement changes are substantially smaller than the shell-versus-OFF differences reported in the main Stage~2 results.}
\label{tab:app-fd-grid-study}
\small
\begin{tabular}{llll}
\toprule
direction & comparison & relative $L_2$ change & max pointwise change \\
\midrule
axial & $(97,48,49) \to (97,48,97)$ & $1.62\times10^{-4}$ & $6.66\times10^{-4}$ \\
axial & $(25,48,49) \to (25,48,97)$ & $1.58\times10^{-4}$ & --- \\
axial & $(25,48,97) \to (25,48,193)$ & $4.61\times10^{-5}$ & --- \\
radial ($T_{\mathrm{wall}}$) & $(25,48,97) \to (193,48,97)$ & $1.47\times10^{-2}$ & $1.79\times10^{-2}$ \\
radial ($\partial_n T_{\mathrm{wall}}$) & $(25,48,97) \to (193,48,97)$ & $1.71\times10^{-5}$ & $2.06\times10^{-5}$ \\
radial ($T_{\mathrm{wall}}$) & $(25,48,193) \to (193,48,193)$ & $1.47\times10^{-2}$ & $1.79\times10^{-2}$ \\
radial ($\partial_n T_{\mathrm{wall}}$) & $(25,48,193) \to (193,48,193)$ & $1.74\times10^{-5}$ & $2.15\times10^{-5}$ \\
\bottomrule
\end{tabular}
\end{table}

\eject
\section{Additional Stage-2 tables}

This appendix collects the raw per-seed Stage~2 tables that support the main comparisons in the text. Table~\ref{tab:app-stage2-optlr-seed2} reinstates the seed-2 optimizer and learning-rate sensitivity matrix that first revealed the instability of the inherited learning-rate schedule for standard Adam. Tables~\ref{tab:app-stage2-kbeta-sweep} and~\ref{tab:app-stage2-adam999-sweep} list the completed six-seed paired sweeps for the main \kourbeta{} regime and for the low-learning-rate Adam999 portability study.

\begin{table}[h]
\centering
\caption{Seed-2 optimizer and initial-learning-rate sensitivity for Stage~2, comparing OFF and the fixed shell. The inherited schedule starts from $7.5\times 10^{-3}$ and decays to $10^{-5}$; the milder probes use the same cosine form but start from $10^{-3}$.}
\label{tab:app-stage2-optlr-seed2}
\resizebox{\textwidth}{!}{%
\begin{tabular}{ll l c c c c}
\toprule
optimizer & init. LR & arm & final loss & wall BC RMSE & $dT/dn$ RMSE & $T_{\mathrm{wall}}$ RMSE \\
\midrule
\kourbeta{} & $7.5\times 10^{-3}$ & OFF & $5.048\times10^{-7}$ & $4.053\times10^{-3}$ & $3.722\times10^{-3}$ & $5.675\times10^{-2}$ \\
\kourbeta{} & $7.5\times 10^{-3}$ & fixed shell $5\times10^{-4}$ & $6.695\times10^{-7}$ & $1.225\times10^{-3}$ & $1.144\times10^{-3}$ & $1.033\times10^{-2}$ \\
\kourbeta{} & $10^{-3}$ & OFF & $1.052\times10^{-5}$ & $2.003\times10^{-3}$ & $2.076\times10^{-3}$ & $1.056\times10^{-2}$ \\
\kourbeta{} & $10^{-3}$ & fixed shell $5\times10^{-4}$ & $4.751\times10^{-6}$ & $1.392\times10^{-3}$ & $1.446\times10^{-3}$ & $1.059\times10^{-2}$ \\
Adam95 & $7.5\times 10^{-3}$ & OFF & $1.576\times10^{25}$ & $8.533\times10^{7}$ & $6.322\times10^{8}$ & $5.762\times10^{5}$ \\
Adam95 & $7.5\times 10^{-3}$ & fixed shell $5\times10^{-4}$ & $2.614\times10^{-1}$ & $3.232\times10^{-1}$ & $3.229\times10^{-1}$ & $5.748\times10^{-1}$ \\
Adam999 & $7.5\times 10^{-3}$ & OFF & $3.465\times10^{6}$ & $6.149\times10^{1}$ & $6.149\times10^{1}$ & $3.083\times10^{2}$ \\
Adam999 & $7.5\times 10^{-3}$ & fixed shell $5\times10^{-4}$ & $3.261\times10^{-1}$ & $2.559\times10^{-1}$ & $2.554\times10^{-1}$ & $5.160\times10^{-1}$ \\
Adam999 & $10^{-3}$ & OFF & $3.195\times10^{-4}$ & $2.100\times10^{-2}$ & $2.017\times10^{-2}$ & $2.138\times10^{-1}$ \\
Adam999 & $10^{-3}$ & fixed shell $5\times10^{-4}$ & $1.385\times10^{-5}$ & $2.232\times10^{-3}$ & $2.338\times10^{-3}$ & $1.028\times10^{-2}$ \\
\bottomrule
\end{tabular}}
\end{table}

\begin{table}[h]
\centering
\caption{Completed six-seed Stage~2 sweep under the main \kourbeta{} regime, comparing OFF and the fixed shell.}
\label{tab:app-stage2-kbeta-sweep}
\resizebox{\textwidth}{!}{%
\begin{tabular}{c c c c c c c c c}
\toprule
\multirow{2}{*}{seed} &
\multicolumn{2}{c}{final loss} &
\multicolumn{2}{c}{wall BC RMSE} &
\multicolumn{2}{c}{$dT/dn$ RMSE} &
\multicolumn{2}{c}{$T_{\mathrm{wall}}$ RMSE} \\
\cmidrule(lr){2-3}\cmidrule(lr){4-5}\cmidrule(lr){6-7}\cmidrule(lr){8-9}
& OFF & fixed shell & OFF & fixed shell & OFF & fixed shell & OFF & fixed shell \\
\midrule
0 & \num{7.662e-7} & \num{3.981e-7} & \num{1.960e-2} & \num{5.987e-4} & \num{1.310e-2} & \num{6.264e-4} & \num{1.136e-2} & \num{1.024e-2} \\
1 & \num{1.495e-4} & \num{1.308e-6} & \num{2.125e-2} & \num{8.006e-4} & \num{1.369e-2} & \num{8.380e-4} & \num{8.920e-3} & \num{1.027e-2} \\
2 & \num{5.048e-7} & \num{6.695e-7} & \num{4.053e-3} & \num{1.225e-3} & \num{3.722e-3} & \num{1.144e-3} & \num{5.675e-2} & \num{1.033e-2} \\
3 & \num{1.019e-5} & \num{9.446e-6} & \num{2.219e-3} & \num{1.605e-3} & \num{2.293e-3} & \num{1.726e-3} & \num{1.049e-2} & \num{1.039e-2} \\
4 & \num{6.170e-7} & \num{1.070e-6} & \num{2.009e-2} & \num{7.450e-4} & \num{1.691e-2} & \num{8.012e-4} & \num{1.521e-1} & \num{1.037e-2} \\
5 & \num{9.965e-6} & \num{9.631e-7} & \num{5.838e-3} & \num{6.016e-4} & \num{5.559e-3} & \num{6.438e-4} & \num{6.385e-2} & \num{1.049e-2} \\
\bottomrule
\end{tabular}}
\end{table}

\begin{table}[H]
\centering
\caption{Completed six-seed Adam999 sweep at initial learning rate $10^{-3}$, comparing OFF and the fixed shell.}
\label{tab:app-stage2-adam999-sweep}
\resizebox{\textwidth}{!}{%
\begin{tabular}{c c c c c c c c c}
\toprule
\multirow{2}{*}{seed} &
\multicolumn{2}{c}{final loss} &
\multicolumn{2}{c}{wall BC RMSE} &
\multicolumn{2}{c}{$dT/dn$ RMSE} &
\multicolumn{2}{c}{$T_{\mathrm{wall}}$ RMSE} \\
\cmidrule(lr){2-3}\cmidrule(lr){4-5}\cmidrule(lr){6-7}\cmidrule(lr){8-9}
& OFF & fixed shell & OFF & fixed shell & OFF & fixed shell & OFF & fixed shell \\
\midrule
0 & \num{2.615e-1} & \num{9.933e-5} & \num{3.232e-1} & \num{6.105e-3} & \num{3.229e-1} & \num{6.055e-3} & \num{5.748e-1} & \num{2.241e-2} \\
1 & \num{4.738e-5} & \num{6.606e-6} & \num{1.409e-2} & \num{1.487e-3} & \num{1.167e-2} & \num{1.538e-3} & \num{2.057e-1} & \num{1.057e-2} \\
2 & \num{3.195e-4} & \num{1.385e-5} & \num{2.100e-2} & \num{2.232e-3} & \num{2.017e-2} & \num{2.338e-3} & \num{2.138e-1} & \num{1.028e-2} \\
3 & \num{3.915e-6} & \num{1.010e-5} & \num{1.268e-3} & \num{1.920e-3} & \num{1.337e-3} & \num{1.964e-3} & \num{1.042e-2} & \num{1.056e-2} \\
4 & \num{1.794e-4} & \num{3.203e-4} & \num{7.926e-3} & \num{1.005e-2} & \num{7.732e-3} & \num{9.996e-3} & \num{1.543e-2} & \num{8.466e-3} \\
5 & \num{1.694e-6} & \num{4.532e-5} & \num{1.008e-3} & \num{3.737e-3} & \num{1.062e-3} & \num{3.766e-3} & \num{1.061e-2} & \num{1.015e-2} \\
\bottomrule
\end{tabular}}
\end{table}

\eject
\section{Supporting Stage-1 ablations}\label{sec:App_Stage1_ablations}

The current paper draft keeps the Stage-1 main text focused on the replicated 30k comparison. Two earlier ablations are still worth recording because they explain how the final design was chosen.

\subsection{10k weight sweep}

By 10k, increasing the FD weight from OFF to $10^{-3}$ monotonically improved best validation and the dense/random residual-centric metrics. This was the first sign that the auxiliary term behaves as an early basin shaper rather than as mere post-hoc smoothing.

\begin{table}[H]
\centering
\caption{10k FD-resgrad sweep on the Stage-1 Poisson benchmark (one matched seed triplet).}
\resizebox{0.80\textwidth}{!}{%
\begin{tabular}{lccccc}
\toprule
FD weight & best val & rel $L^2(u)$ & rel $L^2(\grad u)$ & residual RMSE & $\|\grad R\|$ RMSE \\
\midrule
off & 2.178e-03 & 3.708e-03 & 1.170e-02 & 5.187e-02 & 2.029e+00 \\
1e-5 & 2.142e-03 & 3.691e-03 & 1.163e-02 & 5.139e-02 & 2.009e+00 \\
1e-4 & 1.895e-03 & 3.619e-03 & 1.119e-02 & 4.804e-02 & 1.870e+00 \\
5e-4 & 1.260e-03 & 3.401e-03 & 1.051e-02 & 3.905e-02 & 1.510e+00 \\
1e-3 & 9.767e-04 & 3.316e-03 & 9.882e-03 & 3.390e-02 & 1.324e+00 \\
\bottomrule
\end{tabular}}
\end{table}

\subsection{Anti-grid-lock controls}\label{sec:App_anti_grid_lock}

The phase-safe 20k comparison among \texttt{fixed-safe}, \texttt{cycle4}, and \texttt{jitter4} showed no evidence that phase cycling or jitter rescued generalization. Here \texttt{fixed-safe} denotes a single cropped auxiliary grid whose stencil support lies on the common $64\times64$ interior support shared by all half-cell phase shifts; \texttt{cycle4} cycles deterministically through the four half-cell offsets $(0,0)$, $(h/2,0)$, $(0,h/2)$, and $(h/2,h/2)$ while retaining that common-support crop; and \texttt{jitter4} cycles through four slightly jittered banks built on the same phase-safe support. On this benchmark the simple fixed-safe auxiliary grid remained the strongest default.

\begin{table}[H]
\centering
\caption{20k anti-grid-lock ablation on the phase-safe $64\times64$ family.}
\label{tab:anti-grid-lock}
\resizebox{\textwidth}{!}{%
\begin{tabular}{l l c c c c c}
\toprule
FD weight & grid strategy & best val & rel $L^2(u)$ & rel $L^2(\grad u)$ & residual RMSE & $\|\grad R\|$ RMSE \\
\midrule
1e-3 & fixed-safe & 5.366e-04 & 2.489e-03 & 8.193e-03 & 2.257e-02 & 1.222e+00 \\
1e-3 & cycle4 & 6.893e-04 & 5.196e-03 & 8.315e-03 & 2.559e-02 & 1.191e+00 \\
1e-3 & jitter4 & 6.676e-04 & 3.565e-03 & 8.277e-03 & 2.490e-02 & 1.205e+00 \\
1e-2 & fixed-safe & 3.983e-04 & 3.723e-03 & 1.123e-02 & 1.217e-02 & 7.689e-01 \\
1e-2 & cycle4 & 7.197e-04 & 5.224e-03 & 1.166e-02 & 2.190e-02 & 7.235e-01 \\
1e-2 & jitter4 & 1.457e-03 & 4.258e-03 & 1.155e-02 & 3.540e-02 & 8.042e-01 \\
\bottomrule
\end{tabular}}
\end{table}

\section{Reproducibility materials}

For the present submission, the manuscript bundle contains the source \LaTeX{}, the figure-generation scripts, and the derived summary tables used in the paper. The fuller run-level JSON summaries, companion code, and finite-difference reference-solution artifacts are retained as internal provenance during review and are intended for public release after journal acceptance.
\eject


\begin{thebibliography}{99}

\bibitem{raissi2019pinn}
M.~Raissi, P.~Perdikaris, and G.~E. Karniadakis.
\newblock Physics-informed neural networks: A deep learning framework for solving forward and inverse problems involving nonlinear partial differential equations.
\newblock \emph{Journal of Computational Physics}, 378:686--707, 2019.

\bibitem{wang2021gradpath}
S.~Wang, Y.~Teng, and P.~Perdikaris.
\newblock Understanding and mitigating gradient flow pathologies in physics-informed neural networks.
\newblock \emph{SIAM Journal on Scientific Computing}, 43(5):A3055--A3081, 2021.

\bibitem{wang2022ntk}
S.~Wang, X.~Yu, and P.~Perdikaris.
\newblock When and why PINNs fail to train: A neural tangent kernel perspective.
\newblock \emph{Journal of Computational Physics}, 449:110768, 2022.

\bibitem{yu2022gpinn}
J.~Yu, L.~Lu, X.~Meng, and G.~E. Karniadakis.
\newblock Gradient-enhanced physics-informed neural networks for forward and inverse PDE problems.
\newblock \emph{Computer Methods in Applied Mechanics and Engineering}, 393:114823, 2022.

\bibitem{xiang2022hfdpinn}
Z.~Xiang, W.~Peng, W.~Zhou, and W.~Yao.
\newblock Hybrid finite difference with the physics-informed neural network for solving PDE in complex geometries.
\newblock arXiv:2202.07926, 2022.

\bibitem{chiu2022canpinn}
P.-H. Chiu et al.
\newblock CAN-PINN: A fast physics-informed neural network based on coupled automatic-numerical differentiation.
\newblock \emph{Computer Methods in Applied Mechanics and Engineering}, 395:114909, 2022.

\bibitem{langer2026fdpinn}
A.~Langer.
\newblock The ill-posed foundations of physics-informed neural networks and their finite-difference variants.
\newblock arXiv:2601.07017, 2026.

\bibitem{Kassinos2025KBeta}
S.~C. Kassinos.
\newblock \kourbeta{}: A Sunspike-Driven Adam Optimizer with Desert Flair.
\newblock \emph{arXiv preprint} arXiv:2508.12996, 2025.

\bibitem{Kassinos2025PINN3D}
S.~C.~Kassinos.
\newblock \emph{kbeta-pinn3d v1.0.1: First public release}.
\newblock Zenodo, 2025.
\newblock doi: \url{https://doi.org/10.5281/zenodo.16915164}.
\newblock url: \url{https://github.com/sck-at-ucy/kbeta-pinn3d}.

\bibitem{HybridRepo2026}
S.~C. Kassinos.
\newblock Hybrid PINN / PINN3D companion software and artifact archive.
\newblock Software archive accompanying the present manuscript, 2026. To be replaced 
by DOI once available. 
\end{thebibliography}
\end{document}